\newcommand{\chatoDisplayMode}[1]{#1}
\definecolor{MyRed}{rgb}{0.6,0.0,0.0} 
\definecolor{MyBlack}{rgb}{0.1,0.1,0.1} 
\newcommand{\inred}[1]{{\color{MyRed}\sf\textbf{\textsc{#1}}}}
\newcommand{\frameit}[2]{
  \begin{center}
  {\color{MyRed}
  \framebox[.9\columnwidth][l]{
    \begin{minipage}{.85\columnwidth}
    \inred{#1}: {\sf\color{MyBlack}#2}
    \end{minipage}
  }\\
  }
  \end{center}
}
\newcommand{\note}[2][]{\chatoDisplayMode{\def\@tmpsig{#1}\frameit{{\Pointinghand} Note}{#2\ifx \@tmpsig \@empty \else \mbox{ --\em #1}\fi}}}
\newcommand{\textcite}[1]{\citeauthor{#1} \shortcite{#1}}
\newcommand{\hide}[1]{}
\newcommand{\mtx}[1]{\mathbf{#1}}
\newcommand{\trans}{^\top}
\newcommand{\iffont}[2]{\ifthenelse{\equal{\f@family}{#1}}{#2}{}}
  \DeclareSymbolFont{greek}{OML}{cmm}{m}{n}
  \DeclareMathSymbol{\alpha}{\mathalpha}{greek}{"0B}
  \DeclareMathSymbol{\beta}{\mathalpha}{greek}{"0C}
  \DeclareMathSymbol{\gamma}{\mathalpha}{greek}{"0D}
  \DeclareMathSymbol{\delta}{\mathalpha}{greek}{"0E}
  \DeclareMathSymbol{\epsilon}{\mathalpha}{greek}{"0F}
  \DeclareMathSymbol{\zeta}{\mathalpha}{greek}{"10}
  \DeclareMathSymbol{\eta}{\mathalpha}{greek}{"11}
  \DeclareMathSymbol{\theta}{\mathalpha}{greek}{"12}
  \DeclareMathSymbol{\iota}{\mathalpha}{greek}{"13}
  \DeclareMathSymbol{\kappa}{\mathalpha}{greek}{"14}
  \DeclareMathSymbol{\lambda}{\mathalpha}{greek}{"15}
  \DeclareMathSymbol{\mu}{\mathalpha}{greek}{"16}
  \DeclareMathSymbol{\nu}{\mathalpha}{greek}{"17}
  \DeclareMathSymbol{\xi}{\mathalpha}{greek}{"18}
  \DeclareMathSymbol{\pi}{\mathalpha}{greek}{"19}
  \DeclareMathSymbol{\rho}{\mathalpha}{greek}{"1A}
  \DeclareMathSymbol{\sigma}{\mathalpha}{greek}{"1B}
  \DeclareMathSymbol{\tau}{\mathalpha}{greek}{"1C}
  \DeclareMathSymbol{\upsilon}{\mathalpha}{greek}{"1D}
  \DeclareMathSymbol{\phi}{\mathalpha}{greek}{"1E}
  \DeclareMathSymbol{\chi}{\mathalpha}{greek}{"1F}
  \DeclareMathSymbol{\psi}{\mathalpha}{greek}{"20}
  \DeclareMathSymbol{\omega}{\mathalpha}{greek}{"21}
  \DeclareMathSymbol{\varepsilon}{\mathalpha}{greek}{"22}
  \DeclareMathSymbol{\vartheta}{\mathalpha}{greek}{"23}
  \DeclareMathSymbol{\varpi}{\mathalpha}{greek}{"24}
  \DeclareMathSymbol{\varrho}{\mathalpha}{greek}{"25}
  \DeclareMathSymbol{\varsigma}{\mathalpha}{greek}{"26}
  \DeclareMathSymbol{\varphi}{\mathalpha}{greek}{"27}
  \DeclareSymbolFont{otone}{OT1}{cmr}{m}{n}
  \DeclareMathSymbol{\Gamma}{\mathalpha}{otone}{0}
  \DeclareMathSymbol{\Delta}{\mathalpha}{otone}{1}
  \DeclareMathSymbol{\Theta}{\mathalpha}{otone}{2}
  \DeclareMathSymbol{\Lambda}{\mathalpha}{otone}{3}
  \DeclareMathSymbol{\Xi}{\mathalpha}{otone}{4}
  \DeclareMathSymbol{\Pi}{\mathalpha}{otone}{5}
  \DeclareMathSymbol{\Sigma}{\mathalpha}{otone}{6}
  \DeclareMathSymbol{\Upsilon}{\mathalpha}{otone}{7}
  \DeclareMathSymbol{\Phi}{\mathalpha}{otone}{8}
  \DeclareMathSymbol{\Psi}{\mathalpha}{otone}{9}
  \DeclareMathSymbol{\Omega}{\mathalpha}{otone}{10}
  \DeclareSymbolFont{syms}{OML}{cmm}{m}{it}
  \DeclareMathSymbol{\partial}{\mathord}{syms}{"40}
  \DeclareMathAlphabet{\mathbold}{OML}{cmm}{b}{it}
  \DeclareSymbolFont{largesymbols}{OMX}{cmex}{m}{n}
\newcommand{\insertMain}{
\begin{figure}[htbp]
\centering
{%
    \subfigure[]{%
    \includegraphics[width=0.49\linewidth]{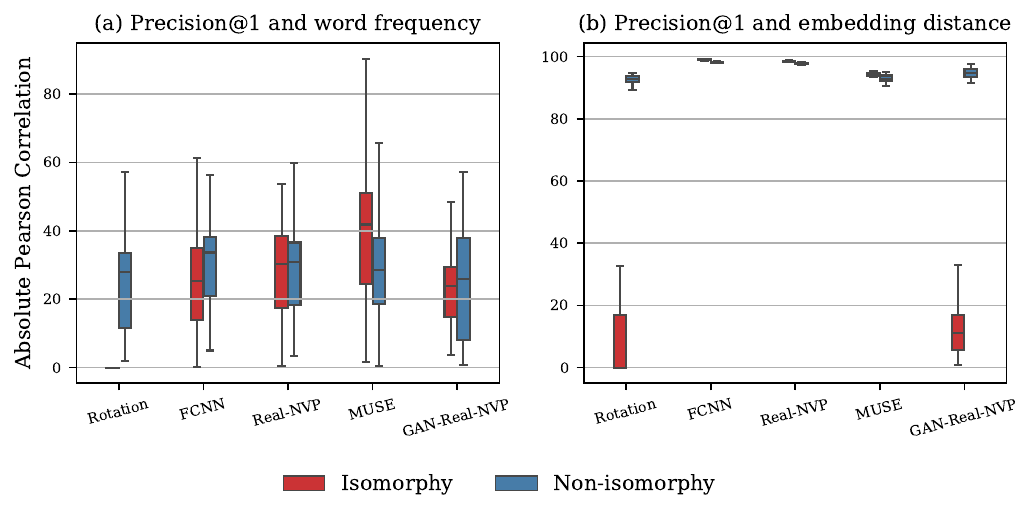}
    } 
    \subfigure[]{%
    \includegraphics[width=0.49\textwidth]{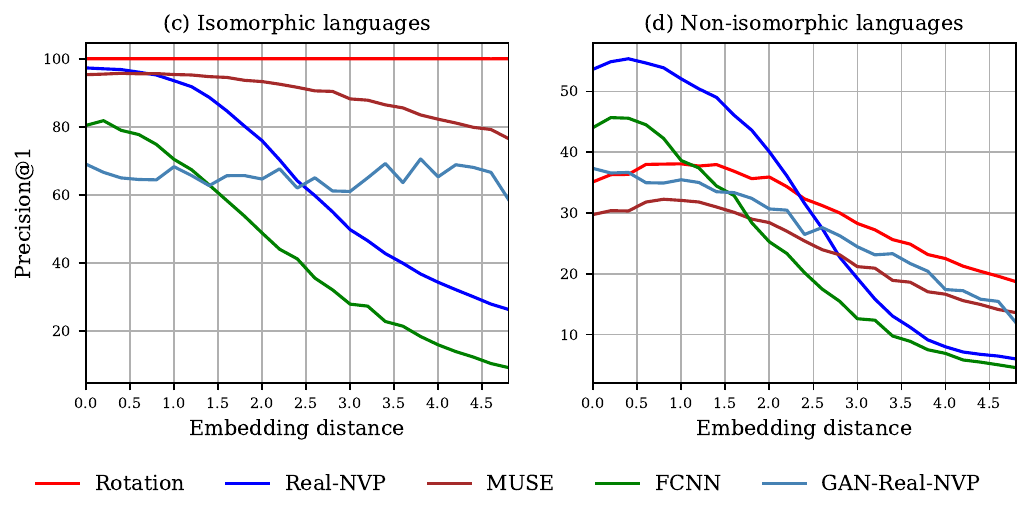}
    }
}
{\caption{Absolute Pearson correlation between task performance and (a) word frequency (occurrence) and (b) similarity between train and test domains (the distance between embeddings on train and test sets). We set frequency bins $k \in \{5,10,\dots,100\}$,
and similarity bins $\epsilon \in \{0, 0.2,\dots,5\}$.
We set $t$ to 1 in all isomorphic settings, and $t$ to 5 in non-isomorphic settings. (c)+(d) compares the generalization of approaches. Results are averaged across 10 runs.
}
\label{fig:main-simulation}
}
   
\end{figure} 
}
\newcommand{\insertValidationCriterion}{
\begin{figure}[htbp]
\centering
{%
    \subfigure{%
    \centering 
    \includegraphics[width=0.35\textwidth]{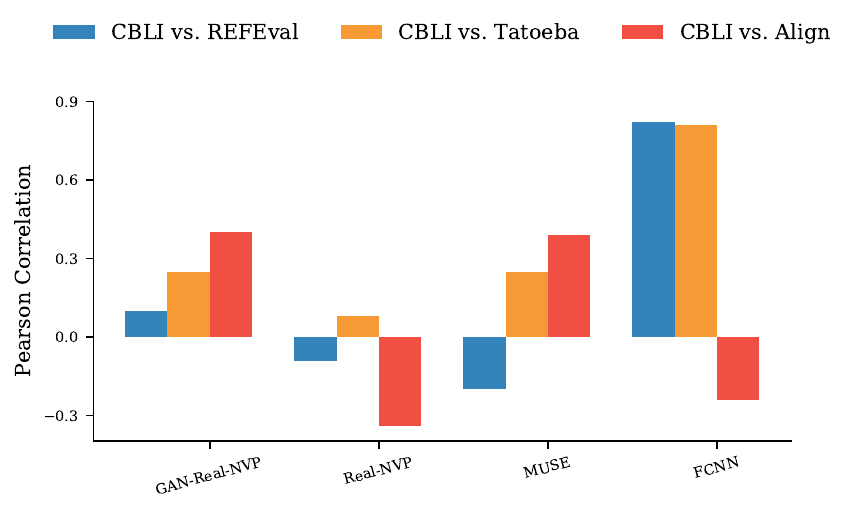}
    }
    \subfigure{%
    \centering
    \includegraphics[width=0.63\textwidth]{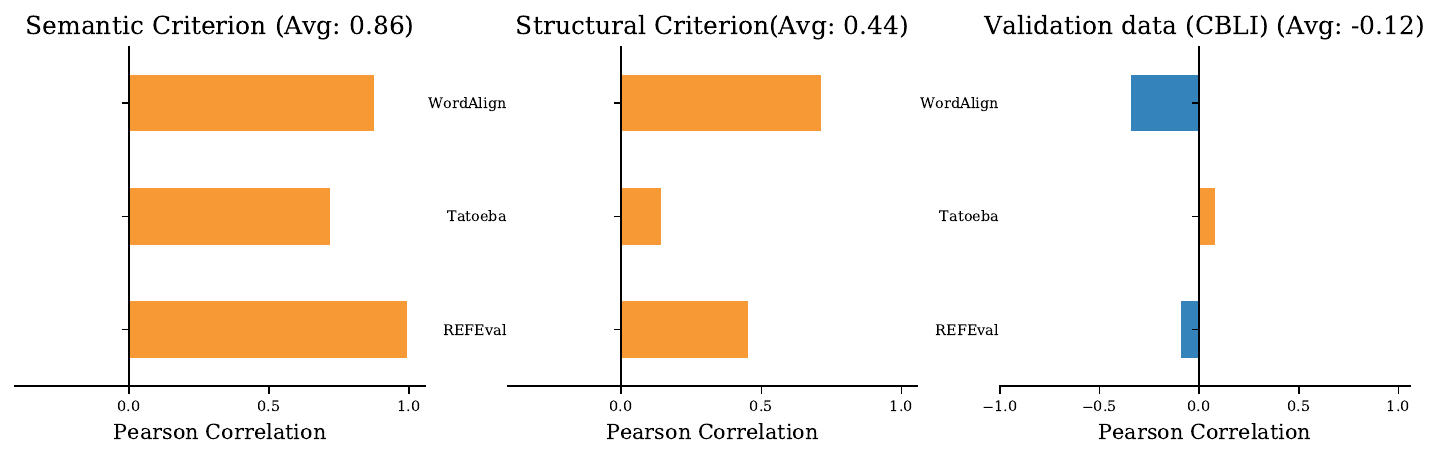}
    }
}
\caption{Pearson correlation between task performances (I) and between validation criteria and task performance given by Real-NVP (II). 
Results are averaged across languages and encoders. 
For each task, we collect model performances and criteria scores over 20 epochs.
\label{fig:validation-criterion}}  
   
\end{figure} 
}
\newcommand{\insertExample}{
\begin{figure*}[!htp]
\centerline{\includegraphics[width=0.6\linewidth]{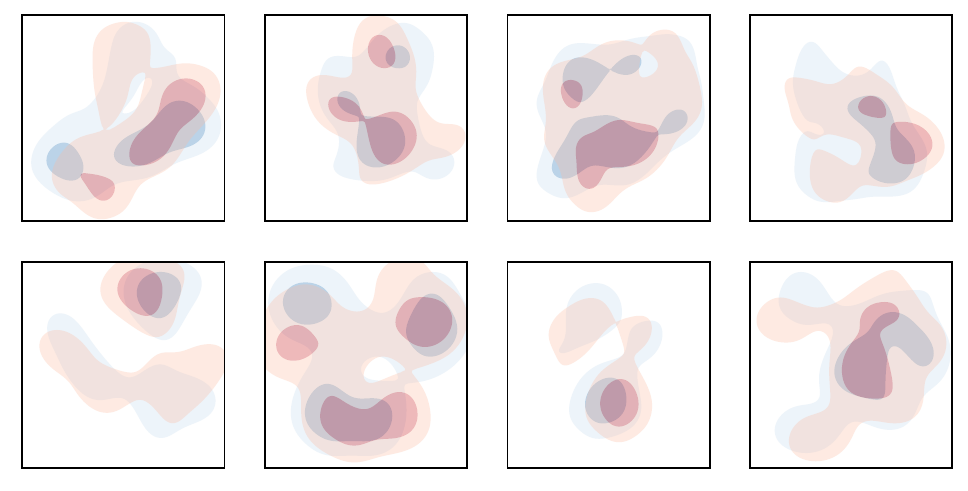}}  
 \caption{Eight figures are constructed in simulation. Each depicts two languages pertaining to two subspaces, colored in blue and red. Each subspace consists of up to 3 densities with each representing a word. Each density contains a number of data points sampled from a two-dimensional Gaussian distribution, as a reflection of word occurrence.
 }
 \label{fig:example}  
\end{figure*} 
}
\newcommand{\insertDensity}{
\begin{figure}[htbp]
\centering
{%
    \subfigure[]{%
    \centering 
    \includegraphics[width=0.43\linewidth]{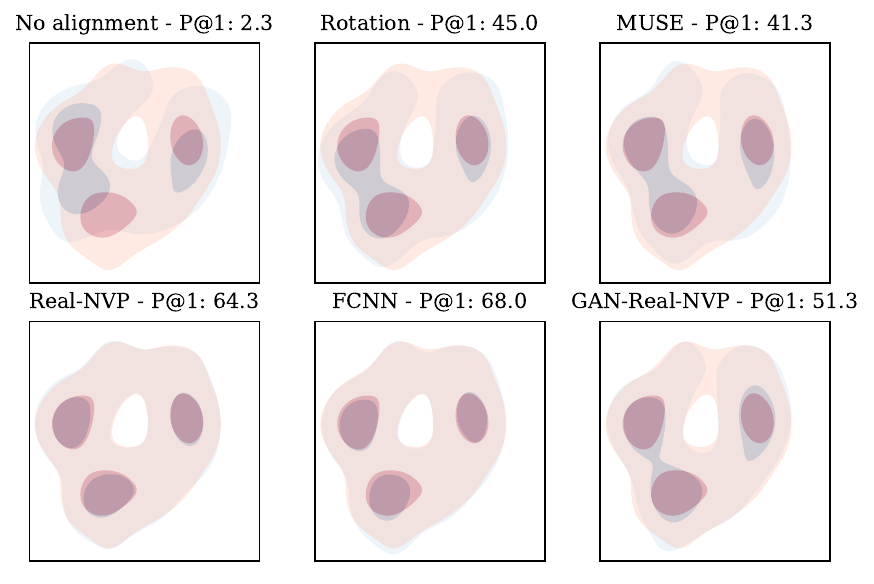}
    }
    \subfigure[]{%
    \centering
    \includegraphics[width=0.54\linewidth]{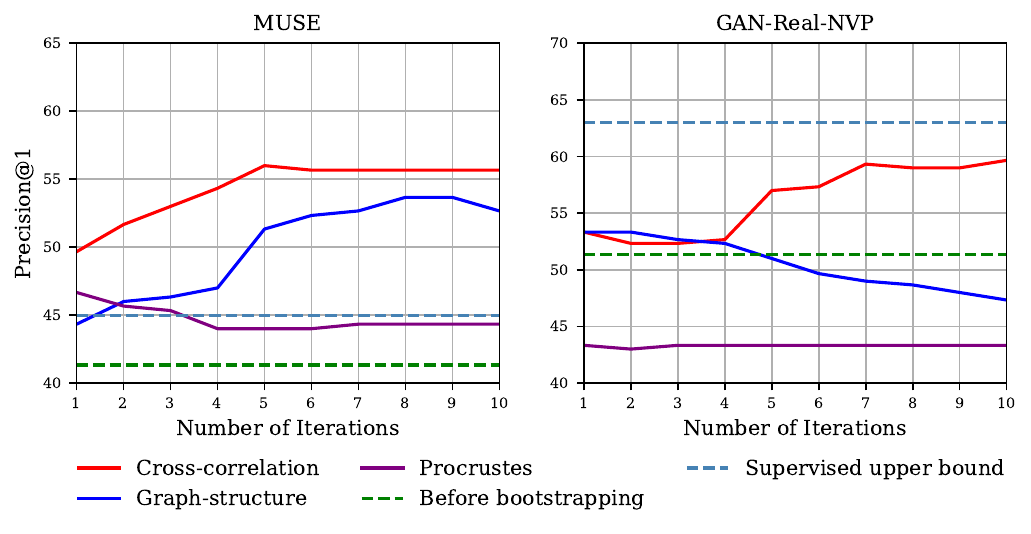}
    }
}
{\caption{(I) shows how well two languages are aligned according to a visual introspection (subspace overlaps) and Precision@1;
 (II) compares unsupervised approaches (MUSE and GAN-Real-NVP) with the supervised counterparts (Rotation and Real-NVP) in non-isomorphic settings ($t=5$). We set the occurrence per word $k$ to 100. 
 }\label{fig:density-fine-tuning}  }
 
\end{figure} 
}
\newcommand{\insertTableTaskPerformance}{
\begin{table}[htbp]
    \footnotesize
    \setlength{\tabcolsep}{3.2pt}
      \centering
      {
        \begin{tabular}{l rrrr| rrrr}
            \toprule
            & \multicolumn{4}{c}{m-BERT} & \multicolumn{4}{c}{XLM-R} \\
            Alignments & RFEval & Tatoeba & CBLI & Align & RFEval & Tatoeba & CBLI & Align\\
            \midrule
            Original & 27.23&49.35&50.9&61.54&26.42&63.40&48.58&59.77\\
            \midrule
            \multicolumn{8}{l}{\textit{Supervised mapping functions}}\\
            Rotation-20k & 38.73&55.28&58.45&\textbf{62.83}&34.67&68.60&53.67&60.85\\
            FCNN-20k (semantic criterion) & \textbf{42.72}&61.18&55.30&61.50&36.67&\textbf{80.20}&50.88&59.87\\
            FCNN-20k (5 epochs) & 38.40&58.97&54.02&61.02&33.50&77.98&50.48&59.50\\
            Real-NVP-20k (semantic criterion) & \textbf{42.32}&\textbf{62.87}&57.62&\textbf{62.59}&\textbf{44.17}&\textbf{80.08}&\textbf{61.63}&\textbf{62.84}\\
             Real-NVP-20k (5 epochs) & 40.12&60.70&58.52&61.80&42.24&78.75&61.76&61.20\\
            GBDD-20k & 28.77&52.28&51.42&61.71&27.13&68.85&48.42&59.81\\
            Joint-Align-100k (3 epochs) & 41.23&59.13&\textbf{64.67}&\textbf{62.30}&-&-&-&- \\
            InfoXLM-42GB (150K training steps) &-&-&-&- &37.60&76.10&60.80&\textbf{62.94} \\
            \midrule
            \multicolumn{8}{l}{\textit{Unsupervised mapping functions}}\\
            Normalization & 30.08&61.28&54.88&\textbf{62.54}&39.52&79.75&59.03&62.55\\
            VecMap-20k ($\sim$500 epochs) & 30.77&55.00&\textbf{64.42}&\textbf{62.50}&-&-&-&-\\
            MUSE-20k (5 epochs) & 29.20&50.20&52.30&61.56&25.21&63.42&50.20&60.20\\
            MUSE-20k (semantic criterion) & 31.23&51.42&52.48&61.64&27.55&65.72&50.00&60.01\\
             + Cross-Correlation & 35.25&52.90&52.87&\textbf{62.63}&32.05&69.23&49.80&60.49 \\
             + Graph Structure& 33.22&51.65&53.10&62.17&29.48&68.33&50.18&60.46 \\
             + Procrustes & 36.85&54.13&55.22&\textbf{62.71}&33.37&68.82&50.37&60.59\\
            \hdashline\noalign{\vskip 0.5ex}
            GAN-Real-NVP-20k (5 epochs) & 32.24&59.10&56.79&61.80&39.61&77.77&60.83&60.90\\
            GAN-Real-NVP-20k (semantic criterion) & 33.90&61.20&57.03&\textbf{62.33}&41.72&79.67&\textbf{61.00}&\textbf{62.81}\\
            + Cross-Correlation & 35.33&\textbf{62.32}&58.00&\textbf{62.70}&\textbf{42.60}&\textbf{80.50}&\textbf{61.23}&\textbf{63.15} \\
            + Graph Structure & 34.32&61.82&56.65&\textbf{62.52}&41.55&80.02&60.83&\textbf{62.99} \\
            + Procrustes & \textbf{36.93}&53.95&56.05&\textbf{62.79}&33.78&67.95&51.60&60.51 \\
        	\bottomrule  
        \end{tabular}
   }
    \caption{Results are averaged across language pairs. We bold numbers that significantly outperform others according to paired t-test.
    Joint-Align uses 100k parallel data per language pair; others only use 20k data. InfoXLM uses 42GB parallel data in total.
    Rotation, GBDD and Normalization with closed-form solutions do not require validation criteria for model selection.
    \label{tab:main-task-performances} 
    }
\end{table}
}
\newcommand{\insertTaskCorrelation}{
\begin{table*}[!htbp]
\centering
\footnotesize
\begin{tabular}{ l | l | c c c }
\toprule
Settings &Alignments   & $[-1, 0]$ & $(0, 0.4]$ & $(0.4, 1]$ \\
\midrule
\multirow{2}{*}{Supervised}&
FCNN-20k   & $50\%$ & $0\%$ & $50\%$ \\
& Real-NVP-20k   & $33\%$ & $17\%$ & $50\%$ \\
\midrule
\multirow{2}{*}{Unsupervised} &
MUSE-20k   & $17\%$ & $66\%$ & $17\%$ \\
&GAN-Real-NVP-20k  & $17\%$ & $50\%$ & $33\%$ \\
\bottomrule
\end{tabular}
\caption{
Correlation statistics: the last three columns split the Pearson's $\rho$ range into three intervals. Each entry denotes the percent of task pairs in which the correlation between model performances is in one of the intervals. For instance, the performances across tasks exhibit negative correlations in 17\%-50\% task pairs. For each task, we collect the model performances over 20 epochs. Results are averaged across language pairs and encoders.
\label{tab:task-correlation}}
\end{table*}
}
\newcommand{\wei}[1]{\textcolor{black}{#1}}
\let\Ginclude@graphics\@org@Ginclude@graphics 
\title[Constrained Density Matching and Modeling for Cross-lingual Alignment]{Constrained Density Matching and Modeling for Cross-lingual Alignment of Contextualized Representations}
  \author{\Name{Wei Zhao} \Email{wei.zhao@h-its.org}\\
  \addr Heidelberg Institute for Theoretical Studies \\ Fachbereich Informatik, Technische Universit\"at Darmstadt 
  \AND
  \Name{Steffen Eger} \Email{steffen.eger@uni-bielefeld.de}\\
  \addr Technische Fakult\"at, Universit\"at Bielefeld \\ Fachbereich Informatik, Technische Universit\"at Darmstadt
 }
\begin{document}

\maketitle

\begin{abstract}
Multilingual representations pre-trained with monolingual data exhibit considerably unequal 
task performances across languages. 
Previous studies address this challenge with resource-intensive contextualized alignment, which assumes the availability of large parallel data, thereby leaving under-represented language communities behind.
In this work, we attribute the data hungriness of previous alignment techniques to two limitations: (i) the inability to sufficiently leverage data and (ii) these techniques are not trained properly. 
To address these issues, we introduce supervised and unsupervised density-based approaches named Real-NVP and GAN-Real-NVP, driven by Normalizing Flow, to perform alignment, both dissecting the alignment of multilingual subspaces into density matching and density modeling. We complement these approaches with our validation criteria in order to guide the training process. 
Our experiments encompass 16 alignments, including our approaches, evaluated across 6 language pairs, synthetic data and 5 NLP tasks.
We demonstrate the effectiveness of our approaches in the scenarios of limited and no parallel data. First, our supervised approach trained on 20k parallel data (sentences) mostly surpasses Joint-Align and InfoXLM trained on over 100k parallel sentences. Second, parallel data can be removed without sacrificing performance when integrating our unsupervised approach in our bootstrapping procedure, which is 
theoretically motivated to enforce equality of multilingual subspaces. 
Moreover, we demonstrate the advantages of validation criteria over validation data for guiding supervised training\footnote{Our code and models are available at \url{https://github.com/AIPHES/DensityAlign}}.

\end{abstract}
\begin{keywords}
Multilingual Embeddings; Cross-lingual Alignment
\end{keywords}

\section{Introduction}
\label{sec:intro}

Multilingual text encoders such as m-BERT~\citep{devlin-etal-2019-bert} and XLM-R~\citep{conneau-etal-2020-unsupervised}
have been profiled as the de facto solutions to modeling languages at scale. However, research showed that such encoders pre-trained with monolingual data have failed to align multilingual subspaces, and exhibit strong language bias, i.e., the quality of these encoders largely differs across languages, particularly for dissimilar and low-resource languages~\citep{pires:2019, zhao:2020}.

For that reason, supervised alignment techniques 
emerged, aiming to rectify multilingual representations post-hoc with cross-lingual supervision~\citep{Cao:2020, zhao:2020-refeval, infoxlm}, 
but previous studies are limited in scope to high-resource languages requiring large-scale parallel data. In contrast, unsupervised alignment removing the dependence on parallel data
allows for unlimited 
use
in all languages \citep{artetxe-etal-2017-learning, vecmap:2018}. However, previous studies of unsupervised alignment focusing on static embeddings have not touched on contextualized representations.

In this work, we address cross-lingual alignment for contextualized representations, termed \emph{contextualized alignment}, particularly in the scenarios of limited and no parallel data. In \textbf{supervised} settings, we identify two limitations responsible for the ineffectiveness of previous resource-intensive contextualized alignments: (i) the inability to sufficiently leverage data, i.e., that these techniques do not target the modeling of data density, and (ii) that they are not properly trained due to a lack of validation criteria---recent techniques, such as \citet{wu-dredze-2020-explicit} and \citet{Cao:2020}, have been trained for several epochs without  access to any criteria for model selection, coming at the risk of being mistrained. 
To this end, we start by introducing a density-based, contextualized alignment, which dissects the alignment of multilingual subspaces into two sub-problems with one solution: density modeling and density matching, addressed by Normalizing Flows~\citep{Dinh:2017}.
Second, in order to guide the training process, we present two validation criteria for model selection during training, and demonstrate the superiority of these criteria over validation data.

In \textbf{unsupervised settings}, aiming for unsupervised, contextualized alignment, we carry out density modeling and density matching in the form of adversarial learning~\citep{goodfellow:2014}, and complement this learning process with the validation criteria mentioned previously to guide unsupervised training. Further, we identify a statistical issue of density matching in the unsupervised case: density matching only leads to a weak notion of equality of multilingual subspaces, \emph{viz.}, equality in distribution. Accordingly, we present a bootstrapping procedure enhancing unsupervised alignment by promoting equality of multilingual subspaces. 
We evaluate 
our approaches across 6 language pairs, synthetic data and 5 NLP tasks. Our major findings are summarized as follows:

\begin{itemize}
    \item 
    With 20k parallel data we provided, our supervised alignment mostly surpasses Joint-Align \citep{Cao:2020} and InfoXLM \citep{infoxlm} trained on much larger parallel data. This confirms the effectiveness of the conflation of density matching and density modeling as our alignment does. Second, our unsupervised alignment integrated in bootstrapping procedure rivals supervised counterparts, showing that parallel data can be removed without sacrificing performance. But we admit that these alignments, be it supervised or not, are poor in generalization (see \S\ref{sec:simulation}), calling for an improvement in future work.
    
    \item 
    Not only are validation criteria crucial for guiding unsupervised training, but also for supervised training. 
    Given the performance on validation data and external tasks often correlates weakly, validation data is inappropriate for guiding supervised training. Above all, guiding contextualized alignment with validation criteria is challenging, as the model performances across tasks exhibit negative correlations in about $30\%$ setups in our experiments, i.e., the better the alignment performs in one task, the worse it performs in the other. Thus, we base the evaluation of validation criteria on the model performances in all tasks. 
    We find that validation criteria correlate much better than validation data (treated as criterion) with model performance on average across tasks for guiding supervised training.
\end{itemize}

\section{Related Work}
\label{sec:related}

Recent advances in multilingual representations, such as m-BERT and XLM-R, boost the performance of cross-lingual NLP systems. However, such systems 
exhibit weak(er) performance for dissimilar languages~\citep{pires:2019}
and 
low-resource languages~\citep{zhao:2020}. 
Accordingly, contextualized alignment emerged.
\citet{aldarmaki:2019} show that language-dependent rotation can linearly rectify m-BERT representations. \citet{Cao:2020} find that jointly aligning multiple languages performs better. \citet{zhao:2020-refeval} show that removing language bias in multilingual representations mitigates the vector space misalignment between languages. More recently,  \citet{xia2021metaxl} show that gradient-based alignment is effective for the languages not covered during pre-training in XLM-R. \citet{ot:2021} use optimal transport to finetune multilingual representations, while \citet{infoxlm}
finetune them with translation language modeling as the learning objective. 
However, these studies focused on supervised, resource-intensive alignment techniques and required from 250k to ca. 2M parallel sentences (or a large-scale analogy corpus) in each language pair for substantial improvement. As early attempts to remove the use for parallel data, \citet{libovicky-etal-2020-language} and \citet{zhao:2020} find applying vector space normalization is helpful to yield language-neural representations. However, there lacks a thorough study on unsupervised, contextualized alignment for multilingual representations. 

As for unsupervised alignment, previous studies have predominantly focused on static embeddings, which mostly rely on iterative procedures in two steps, aiming to derive bilingual lexicons as cross-lingual supervision:
(i) inducing seed dictionaries with different approaches, such as adversarial learning~\citep{lample:2018}, similarity based heuristics~\citep{vecmap:2018} and identical strings~\citep{artetxe-etal-2017-learning}, and (ii) applying Procrustes to augment induced lexicons~\citep{lample:2018} 
in an iterative fashion. 

In this work, we present a principled, iterative procedure to enhance our unsupervised alignment on contextualized representations, which employs density-based approaches to induce bilingual lexicons, and then applies our bootstrapping procedure, theoretically grounded in statistics for equality of multilingual subspaces, to iteratively augment lexicons. Lastly, we complement the iterative procedure with validation criteria to guide unsupervised training. We contrast our approaches with other domain adaptation techniques in Section \ref{sec:broader-impact}.
\section{Contextualized Alignment}

Let two random variables $X$ and $Y$ with densities $P_X$ and $P_Y$ describe two populations of contextual word embeddings pertaining to two languages $\ell_1$ and $\ell_2$, with $\Omega_{\ell_1}$ and $\Omega_{\ell_2}$ as two lexicons. Each occurrence of a word is associated to a separate entry in the lexicons.
$X$ maps all entries in $\Omega_{\ell_1}$ to real-valued $m$-dimensional embedding vectors, denoted by $X:\Omega_{\ell_1} \rightarrow \mathbb{R}^m$, and similarly for $Y$. 
A bilingual lexicon $\Omega$ describes a set of translations between $\Omega_{\ell_1}$ and $\Omega_{\ell_2}$.

\paragraph{Empirical inference.} 
Assume a function $f: \mathbb{R}^m \rightarrow \mathbb{R}^m$ perfectly maps $m$-dimensional embedding vectors from $X$ to $Y$. As standard in machine learning, a mapping function $f_\theta$ can be empirically inferred from data, with $\theta$ as model parameters. To this end, we assume data $\mtx{M}_{\ell_1} \in \mathbb{R}^{n\times m}$ and $\mtx{M}_{\ell_2} \in \mathbb{R}^{n\times m}$ are given, corresponding to two sets of contextual word embeddings with a common size of $n$ for simplicity. 
Let a permutation matrix $\mtx{P} \in \{0,1\}^{n\times n}$ ($\mtx{P}\mtx{1}_n=\mtx{1}_n$ and $\mtx{P}\trans{\mtx{1}}_n=\mtx{1}_n$) be a realization of $\Omega$, serving as cross-lingual supervision when available. 
A random variable $\tilde{Y}$with density $P_{\tilde{Y}}$ is a prediction of $Y$ given $X$, i.e., $\tilde{Y}= f_{X\rightarrow Y}(X)$ where the subscript denotes mapping direction.

\subsection{Supervised Alignment}
When parallel data is available, a permutation matrix $\mtx{P}$ can be effortless induced from parallel data with word alignment tools \citep{dyer:2013, simalign:2020}. We introduce a density-based mapping function focusing on two components: density matching and density modeling. To do so, we start by depicting the alignment of multilingual subspaces in the form of density matching between $P_{\Tilde{Y}}$ and $P_Y$:
\begin{equation}
\begin{split}
\label{eq:kl}
\mathrm{KL}(P_{\Tilde{Y}}, P_Y) &= \mathrm{CE}(P_{\Tilde{Y}}, P_Y) - \mathbb{E}_{y\sim P_Y}[\log P_Y(y)] \\ 
&= \|f_{X\rightarrow Y}(\mtx{M}_{\ell_1}) - \mtx{P}\mtx{M}_{\ell_2} \|^2 
- \mathbb{E}_{y\sim P_Y}[\log P_X(f^{-1}_{X\rightarrow Y}(y))|\mathrm{det}(\triangledown_\theta f^{-1}_{X\rightarrow Y}(y))|] 
\end{split}
\end{equation}
where $f_{X\rightarrow Y}$ is the trainable mapping function from $X$ to $Y$.
Given $P_Y$ intractable to compute, previous supervised alignments always minimize the cross-entropy term alone by solving the least squares problem.
Note that the density $P_Y$ can be rewritten to $P_X(f^{-1}_{X\rightarrow \Tilde{Y}}(y))|\mathrm{det}(\triangledown_\theta f^{-1}_{X\rightarrow \Tilde{Y}}(y))|$ by using the change-of-variable rule (assuming $f$ is an invertible function). However, the density $P_Y$ is still intractable given the unknown density $P_X$. 
We overcome this by introducing a generative model named Real-NVP~\citep{Dinh:2017} as use case of Normalizing Flows~\citep{nf:2015}. \wei{Real-NVP is a popular example of invertible neural networks, which can be thought of as a bijective function between two domains of data points (e.g., random noise and real data). Here, we use Real-NVP to address density estimation, i.e., inferring the unknown distribution of word embeddings $X$ and $Y$ from a normal distribution of random noise via the change-of-variable rule.}

To do so, we introduce a latent variable $Z \sim \mathcal{N}(0,\mtx{I})$ with the normal density $P_Z$ \wei{to describe random noise}.
We then use Real-NVP to infer $P_Y$
from $P_Z$,
denoted by $P_Y(y)=P_Z(f_{Z\rightarrow Y}^{-1}(y))|\mathrm{det}(\triangledown_\theta f_{Z\rightarrow Y}^{-1}(y))|$ with $f_{Z\rightarrow Y}$ as a trainable mapping function from $Z$ to $Y$.
Lastly, we rewrite the entropy term in Eq.~\ref{eq:kl} to:
\begin{equation}
\begin{split}
\label{eq:nf-density}
    &\mathbb{E}_{y\sim P_Y}[\log P_Y(y)] =\mathbb{E}_{y\sim P_Y}[\log \mathcal{N}(f_{Z\rightarrow Y}^{-1}(y), 0,\mtx{I})] 
    + \mathbb{E}_{y\sim P_Y}[\log |\mathrm{det}(\triangledown_\theta f_{Z\rightarrow Y}^{-1}(y))|]
\end{split}
\end{equation}

To consider density estimation (modeling) on both $P_X$ and $P_Y$, we perform a dual form of density matching based on JS divergence. We omit the definition for simplicity. In \S\ref{sec:results}, we refer to the above described approach as Real-NVP. 

\subsection{Unsupervised Alignment}
\label{sec:unsupervised}
When $\mtx{P}$ is not given due to a lack of parallel data, we apply adversarial learning to align the two densities $P_{\Tilde{Y}}$ and $P_Y$. As standard in adversarial training, we involve a min-max game between two components to perform density matching: (a) a discriminator distinguishing source and target word embeddings after mapping them and (b) a mapping function aligning source and target word embeddings in order to fool the discriminator. We use a popular adversarial approach, the Wasserstein GAN \citep{wgan:2017}, which aligns the densities $P_{\tilde{Y}}$ and $P_Y$ by minimizing the Earth Mover distance (EMD) between these densities.  To better leverage data, we include density estimation (modeling) based on Real-NVP in the procedure of adversarial training, which maximizes the data likelihood of $X$ and $Y$. 
Taken together, we denote our density-based learning objective by:
\begin{equation}
\begin{split}
\label{eq:wgan}
 &\mathrm{EMD}(P_{\tilde{Y}}, P_Y) =\min_{f_{X\rightarrow Y}} \max_{h_\phi} \mathbb{E}_{y \sim P_Y}[h_{\phi}(y)] 
 - \mathbb{E}_{\tilde{y} \sim P_{\tilde{Y}}}[h_{\phi}(\tilde{y}))]
  + \mathbb{E}_{y\sim P_Y}[\log P_Y(y)]
\end{split}
\end{equation}  
where $h_\phi$ is a 1-Lipschitz constrained discriminator, $\tilde{y}=f_{X\rightarrow Y}(x)$ mapping $X$ to $Y$. Note that $f_{X\rightarrow Y}$ is the composition of $f_{X\rightarrow Z}$ and $f_{Z\rightarrow Y}$, and
the last entropy term aims to maximize the data log-likelihood of $Y$. As in the supervised case, we use a dual form of Eq.~\ref{eq:wgan}. In \S\ref{sec:results}, we refer to the above described approach as GAN-Real-NVP. 

\paragraph{Bootstrapping procedure.} 
After adversarial learning $Y$ and $\tilde{Y}$ are ideally equal in distribution, denoted by $Y\stackrel{\mathrm{dist}}{=} \Tilde{Y}$. However, this is not sufficient.
For instance, let $Y \sim \mathrm{Uniform}(-1, 1)$ and $\Tilde{Y} = -Y$. Clearly, 
$Y$ and $Y$ are equal in distribution, but they are identical only at the origin. Here, we derive the two following conditions that promote the equality of $Y$ and $\Tilde{Y}$ and enhance unsupervised alignment.

\begin{proposition} 
\label{prop:refinement}
Given $Y\stackrel{\mathrm{dist}}{=} \Tilde{Y}$, $\Tilde{Y}$ and $Y$ are equal if one of the following conditions is met:
\begin{itemize}
\item[(i)]
$Y = \mtx{U} \Tilde{Y}$, where $\mtx{U}$ is invertible and $\mtx{U}_{ij}\geq0$ $\forall i,j$. 

\item[(ii)] 
$\mathrm{cor}(\Tilde{Y}_i, Y_i)=1$ for $\forall i$, where $\Tilde{Y}_i$ and $Y_i$ represent the $i$-th component in $\Tilde{Y}$ and $Y$.

\end{itemize}
\end{proposition}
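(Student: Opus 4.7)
I would handle the two conditions separately, treating (ii) as the routine part and (i) as the more delicate one.

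For (ii), my plan is to exploit the fact that a Pearson correlation of $1$ between two $L^2$ random variables forces an almost-sure affine relationship: $\mathrm{cor}(\tilde Y_i, Y_i) = 1$ implies $Y_i = a_i \tilde Y_i + b_i$ a.s.\ for some $a_i > 0$ and $b_i \in \mathbb{R}$. Combining this with $Y \stackrel{\mathrm{dist}}{=} \tilde Y$, which yields coordinatewise $\mathbb{E}[Y_i] = \mathbb{E}[\tilde Y_i]$ and $\mathrm{Var}(Y_i) = \mathrm{Var}(\tilde Y_i)$, matching variances first pins down $a_i = 1$ (using $a_i > 0$), and then matching means pins down $b_i = 0$. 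Hence $Y_i = \tilde Y_i$ a.s.\ for every $i$, so $Y = \tilde Y$.

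For (i), my plan is to combine a characteristic-function identity with Perron--Frobenius-style structure. The identity $Y = U\tilde Y$ together with $Y \stackrel{\mathrm{dist}}{=} \tilde Y$ gives $\phi_{\tilde Y}(U^\top t) = \phi_{\tilde Y}(t)$ for all $t \in \mathbb{R}^m$. Expanding this identity to first and second order at $t = 0$ yields $U\mu = \mu$ (so $\mu = \mathbb{E}[\tilde Y]$ is a $1$-eigenvector of $U$) and $U \Sigma U^\top = \Sigma$ (so $U$ is an isometry for the $\Sigma$-inner product, with $\Sigma = \mathrm{Cov}(\tilde Y)$). I would then invoke the entrywise nonnegativity and invertibility of $U$: since $U$ has a positive fixed eigenvector, Perron--Frobenius pins the spectral radius of $U$ to $1$, and the $\Sigma$-isometry condition pushes all eigenvalues onto the unit circle. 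Iterating $\phi_{\tilde Y}(U^{k\top} t) = \phi_{\tilde Y}(t)$ and letting $U$ act on the support of $\tilde Y$ would then collapse $U$ to the identity, giving $Y = \tilde Y$ a.s.

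The main obstacle will be closing (i). Nonnegative invertible matrices whose eigenvalues lie on the unit circle are not forced to equal $I$---every permutation matrix satisfies the first two moment constraints exactly when the coordinates of $\tilde Y$ are exchangeable---so the argument cannot stop at second order. I would bring in the full distributional equality via higher-order mixed moments that are not invariant under coordinate relabeling, or an explicit identifiability assumption that the marginals of $\tilde Y$ differ across coordinates (which is the generic situation for contextualized embeddings, since per-coordinate distributions at the encoder output are essentially never exchangeable). Under such a distinguishability assumption, the only nonnegative invertible $U$ with $U\tilde Y \stackrel{\mathrm{dist}}{=} \tilde Y$ is $U = I$, which closes the argument and yields $Y = \tilde Y$.
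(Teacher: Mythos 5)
Your treatment of (ii) is correct and is essentially the paper's own argument in different clothing: the paper shows $\mathrm{Var}[\tilde{Y}_i/\sigma_{\tilde{y}_i}-Y_i/\sigma_{y_i}]=0$ and then uses equality of the marginal means and standard deviations (from $Y\stackrel{\mathrm{dist}}{=}\tilde{Y}$) to conclude $\tilde{Y}_i=Y_i$ a.s.; your route through the Cauchy--Schwarz equality case ($Y_i=a_i\tilde{Y}_i+b_i$ with $a_i>0$, then $a_i=1$ from matched variances and $b_i=0$ from matched means) is the same computation, and both implicitly need the nondegeneracy $\sigma_{y_i}>0$ that is already required for the correlation to be defined.

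For (i) you do not close the argument, and your diagnosis of why is exactly right: the claim is false as stated, so no proof strategy can succeed without an added hypothesis. If $\tilde{Y}$ has exchangeable (e.g.\ i.i.d.\ nondegenerate) coordinates and $\mtx{U}$ is any non-identity permutation matrix, then $\mtx{U}$ is invertible with $\mtx{U}_{ij}\ge 0$, $Y=\mtx{U}\tilde{Y}\stackrel{\mathrm{dist}}{=}\tilde{Y}$, and yet $Y\neq\tilde{Y}$ almost surely. The paper's proof does not escape this counterexample: its step $P(\mtx{U}\tilde{Y}\le y)=P(\tilde{Y}\le\mtx{U}^{-1}y)$ only gives the inclusion $\{\tilde{Y}\le\mtx{U}^{-1}y\}\subseteq\{\mtx{U}\tilde{Y}\le y\}$ under $\mtx{U}\ge 0$ (the reverse inclusion needs $\mtx{U}^{-1}\ge 0$ as well, which confines $\mtx{U}$ to monomial matrices), and its final inference that $P(\tilde{Y}\le\mtx{U}^{-1}y)=P(\tilde{Y}\le y)$ for all $y$ forces $\mtx{U}=\mtx{I}$ is precisely what the permutation counterexample refutes. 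So the missing ingredient you identify---a non-exchangeability/identifiability assumption on the coordinate marginals, or distributional information beyond second moments---is not an artifact of your characteristic-function route; it is needed by any correct proof, and the proposition should be restated with such a hypothesis (in practice, condition (ii) is the one the paper's cross-correlation constraint actually operationalizes, and it is the one that genuinely holds).
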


\begin{proof}

(i) 
$P(Y\leq y) = P(\Tilde{Y}\leq y)$ for all $y$, due to $Y\stackrel{\mathrm{dist}}{=} \Tilde{Y}$. 
If $Y = \mtx{U}\Tilde{Y}$, then $P(\Tilde{Y}\le y)=P(Y\le y)=P(\mathbf{U}\Tilde{Y}\le y)$. 
If $\mathbf{U}\ge 0$, then $P(\mathbf{U}\Tilde{Y}\le y)=P(\Tilde{Y}\le \mathbf{U}^{-1}y)$. 
Thus, $P(\Tilde{Y}\le \mathbf{U}^{-1}y)=P(\Tilde{Y}\le y)$ for all $y$. 
This implies that $\mtx{U} = \mtx{I}$. Thus, $\Tilde{Y} = Y$.

(ii) If $\mathrm{cor}(\Tilde{Y}_i, Y_i)=1$ for $\forall i$, then $\mathrm{Var}[(\frac{\Tilde{Y}_i}{\sigma_{\Tilde{y}_i}} - \frac{Y_i}{\sigma_{y_i}})] = 0$, thus $\mathbb{E}[(\frac{\Tilde{Y}_i}{\sigma_{\Tilde{y}_i}} - \frac{Y_i}{\sigma_{y_i}})^2] - {\mathbb{E}[(\frac{\Tilde{Y}_i}{\sigma_{\Tilde{y}_i}} - \frac{Y_i}{\sigma_{y_i}})]}^2 = 0$. However, the second term equals to $0$ by using $\mathbb{E}[(\frac{\Tilde{Y}_i}{\sigma_{\Tilde{y}_i}} - \frac{Y_i}{\sigma_{y_i}})] = \frac{\mathbb{E}[\Tilde{Y}_i]}{\sigma_{\Tilde{y}_i}} - \frac{\mathbb{E}[Y_i]}{\sigma_{y_i}} = 0$ due to $\Tilde{Y}_i \stackrel{\mathrm{dist}}{=}Y_i$.
Thus, $\mathbb{E}[(\frac{\Tilde{Y}_i}{\sigma_{\Tilde{y}_i}} - \frac{Y_i}{\sigma_{y_i}})^2]=0$, 
and this implies $\frac{\Tilde{Y}_i}{\sigma_{\Tilde{y}_i}} = \frac{Y_i}{\sigma_{y_i}}$ since the non-negative $(\frac{\Tilde{Y}_i}{\sigma_{\Tilde{y}_i}} - \frac{Y_i}{\sigma_{y_i}})^2$ must be zero if its expectation is $0$. Note that $\sigma_{\Tilde{y}_i}=\sigma_{y_i}$ due to $\Tilde{Y}_i \stackrel{\mathrm{dist}}{=}Y_i$. This implies that $\Tilde{Y}_i = Y_i$ for $\forall i$. 
\end{proof}

To design computational approaches meeting the above conditions,
we introduce additional notation and the following lemma.
Let $\mtx{M}_{X}$, $\mtx{M}_Y$ be embeddings from $X$ and $Y$, and  $\mtx{M}_{\Tilde{Y}}$=$f_\theta(\mtx{M}_{X})$.

\begin{lemma}
\label{eq:lemma}
If $\mtx{M}_{\Tilde{Y}}\mtx{M}_{\Tilde{Y}}^{\intercal} = \mtx{M}_Y\mtx{M}_Y^{\intercal}$ and $\mtx{M}_{\Tilde{Y}}$ is invertible, then $Y=\mtx{U}\tilde{Y}$.
\end{lemma}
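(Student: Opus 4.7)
The plan is to construct $\mtx{U}$ explicitly from $\mtx{M}_Y$ and $\mtx{M}_{\Tilde{Y}}$ and verify that it realizes the desired linear relation between the random variables $Y$ and $\tilde{Y}$ at the sample level. First, I would use the Gram-matrix hypothesis together with invertibility of $\mtx{M}_{\Tilde{Y}}$ to conclude that $\mtx{M}_Y$ is also invertible (and in particular square of the same size): the identity $\det(\mtx{M}_Y)^2=\det(\mtx{M}_Y\mtx{M}_Y^{\intercal})=\det(\mtx{M}_{\Tilde{Y}}\mtx{M}_{\Tilde{Y}}^{\intercal})=\det(\mtx{M}_{\Tilde{Y}})^2\neq 0$ rules out singularity.

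Second, I would define the candidate $\mtx{V}:=\mtx{M}_{\Tilde{Y}}^{-1}\mtx{M}_Y$, so that by construction $\mtx{M}_Y=\mtx{M}_{\Tilde{Y}}\mtx{V}$. A one-line computation using the hypothesis then shows $\mtx{V}\mtx{V}^{\intercal}=\mtx{M}_{\Tilde{Y}}^{-1}(\mtx{M}_Y\mtx{M}_Y^{\intercal})(\mtx{M}_{\Tilde{Y}}^{\intercal})^{-1}=\mtx{M}_{\Tilde{Y}}^{-1}(\mtx{M}_{\Tilde{Y}}\mtx{M}_{\Tilde{Y}}^{\intercal})(\mtx{M}_{\Tilde{Y}}^{\intercal})^{-1}=\mtx{I}$, so $\mtx{V}$ is orthogonal and hence invertible. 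Setting $\mtx{U}:=\mtx{V}^{\intercal}$, the matrix identity $\mtx{M}_Y=\mtx{M}_{\Tilde{Y}}\mtx{V}$ rewrites row-wise, under the paper's convention that rows are samples, as $y_i=\mtx{U}\tilde{y}_i$ for each aligned sample pair $(\tilde{y}_i,y_i)$.

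Finally, I would lift this empirical equality to the distributional statement $Y=\mtx{U}\tilde{Y}$ by noting that the random variables $Y$ and $\tilde{Y}$ here are in effect the empirical measures supported on the rows of $\mtx{M}_Y$ and $\mtx{M}_{\Tilde{Y}}$ with the pairing induced by the lexicon indexing. Since the same $\mtx{U}$ realizes the map on every aligned pair, the same $\mtx{U}$ realizes $Y=\mtx{U}\tilde{Y}$.

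The step I expect to be most delicate is not any of the algebra, which is routine, but the lifting from the matrix identity to the random-variable identity: it relies on the implicit assumption that rows of $\mtx{M}_Y$ and $\mtx{M}_{\Tilde{Y}}$ correspond to jointly sampled realizations of $(\tilde{Y},Y)$ in the correct order. Within this paper's setup that is built into the notation, but it is the one place where a more careful measure-theoretic formulation would be needed to make the argument fully general. A secondary caveat worth flagging is that the $\mtx{U}$ produced here is orthogonal rather than entrywise non-negative, so the lemma should be read as supplying the linear form $Y=\mtx{U}\tilde{Y}$, with the non-negativity condition of Proposition~\ref{prop:refinement}(i) to be established separately.
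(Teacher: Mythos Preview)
Your proposal follows the same core construction as the paper: define the linking matrix as $\mtx{M}_{\Tilde{Y}}^{-1}\mtx{M}_{Y}$ and read off the row-wise linear relation. The paper's own proof is extremely terse---it simply writes $\mtx{M}_Y=\mtx{M}_{\Tilde{Y}}\,\mtx{M}_{\Tilde{Y}}^{-1}\mtx{M}_{Y}$, names $\mtx{U}=\mtx{M}_{\Tilde{Y}}^{-1}\mtx{M}_{Y}$, and then asserts $Y=\mtx{U}\tilde{Y}$. Notice that this identity is a tautology and does not actually invoke the Gram-matrix hypothesis; your version goes further by using that hypothesis to show $\mtx{V}\mtx{V}^{\intercal}=\mtx{I}$, which is a genuine gain (it tells you the linking map is orthogonal, not just invertible). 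You also handle the row/column transpose bookkeeping explicitly by setting $\mtx{U}=\mtx{V}^{\intercal}$, which the paper glosses over, and you flag the non-negativity gap with Proposition~\ref{prop:refinement}(i) that the paper only acknowledges later in the text. So: same approach, but your write-up is the more careful of the two.
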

\begin{proof}
If $\mtx{M}_{\Tilde{Y}}\mtx{M}_{\Tilde{Y}}^{\intercal} = \mtx{M}_Y\mtx{M}_Y^{\intercal}$ and $\mtx{M}_{\Tilde{Y}}$ is invertible, then $\mtx{M}_Y = \mtx{M}_{\Tilde{Y}} \mtx{M}_{\Tilde{Y}}^{-1}\mtx{M}_{Y}$. Let $\mtx{U}=\mtx{M}_{\Tilde{Y}}^{-1}\mtx{M}_{Y}$. Then, $\mtx{M}_Y = \mtx{M}_{\Tilde{Y}} \mtx{U}$. If this holds for all $\mtx{M}_{\Tilde{Y}}$ and $\mtx{M}_Y$, then $Y=\mtx{U}\tilde{Y}$.
\end{proof}

In the following, we describe our computational approaches, and then include them as \textbf{constraints} in the adversarial training in order to promote the equality of $Y$ and $\tilde{Y}$. Lastly, we discuss the connection of these constraints with canonical correlation and language isomorphism.

\paragraph{Graph structure.} We depict $\mtx{M}_{\Tilde{Y}}$ and $\mtx{M}_Y$ as $m$-dimensional vertices in two graphs, with $\mtx{M}_{\Tilde{Y}}\mtx{M}_{\Tilde{Y}}^{\intercal}$ and $\mtx{M}_Y\mtx{M}_Y^{\intercal}$ as the weighted adjacency matrices on these graphs. As Lemma~\ref{eq:lemma} states, minimizing the difference between these adjacency matrices allows to meet Prop.\ref{prop:refinement}(i). Thus, the objective becomes:

\begin{equation}
\label{eq:graph}
    \mathrm{EMD}(P_{\tilde{Y}}, P_Y) + 
    \|\mtx{M}_{\Tilde{Y}}\mtx{M}_{\Tilde{Y}}^{\intercal} -  \mtx{M}_Y\mtx{M}_Y^{\intercal}\|^2
\end{equation}
However, we admit that Prop.\ref{prop:refinement}(i) cannot  be strictly met, as guaranteeing $\mtx{U}\geq 0$, i.e., $\mtx{M}_{\Tilde{Y}}^{-1}\mtx{M}_{Y}\geq 0$ is not trivial. This might explain why \emph{graph structure} is worse than \emph{cross-correlation} in our experiments.

\paragraph{Cross-correlation.} We maximize Pearson cross-correlation between de-meaned $\mtx{M}_{\Tilde{Y}}$ and $\mtx{M}_Y$ in order to realize Prop.\ref{prop:refinement}(ii). The objective becomes: 
\begin{equation}
\label{eq:correlation}
    \mathrm{EMD}(P_{\tilde{Y}}, P_Y) + 
    \|\frac{\mathrm{diag}(\mtx{M}_{\Tilde{Y}}^{\intercal}\mtx{M}_Y)}{\mathrm{diag}(\mtx{M}_{\Tilde{Y}}^{\intercal}\mtx{M}_{\Tilde{Y}})\mathrm{diag}(\mtx{M}_Y^{\intercal}\mtx{M}_Y)} - \vv{1}\|^2
\end{equation}

Concerning the construction of $\mtx{M}_{Y}$ and $\mtx{M}_{\tilde{Y}}$, 
we use CSLS~\citep{lample:2018} to induce them from monolingual data, and then update them in an iterative fashion with 
Algorithm~\ref{algo:procedure}.

\begin{algorithm2e}
  \caption{Bootstrapping Procedure}\label{algo:procedure}
\KwIn{$\mtx{M}_{X}, \mtx{M}_{Y} \leftarrow $ population word embeddings of $X$ and $Y$}
\KwIn{$n \leftarrow$ number of bootstrapping iterations \Comment{simulation: $n=10$; real data: $n=3$}}
\KwIn{$f_{X\rightarrow Y} \leftarrow $ an identity matrix as initial mapping function}
\For{$i\leftarrow 1$ \KwTo $n$}{
$\mtx{M}_{\tilde{Y}} \leftarrow f_{X\rightarrow Y}(\mtx{M}_{X})$

$\mtx{P} \leftarrow \mathrm{CSLS}(\mtx{M}_{Y}, \mtx{M}_{\tilde{Y}})$\Comment{induce permutation matrix}

$f_{X\rightarrow Y} \leftarrow \mathrm{EMD}(P_{\tilde{Y}}, P_Y) + g(\mtx{M}_{Y}, \mtx{P}\mtx{M}_{\tilde{Y}})$ \Comment{$g$ is a function of our constraints (see Eq.~\ref{eq:graph}+\ref{eq:correlation})}
}
\KwOut{$f_{X\rightarrow Y}$}
\end{algorithm2e}

\paragraph{Connection with canonical correlation.}
Often, cross-correlation between random vectors are computed using Canonical Correlation Analysis (CCA). Research showed that CCA is useful to improve static embeddings, but it requires finding $k$ primary canonical variables \citep{faruqui:2014}. In contrast, our solution is much cheaper to compute cross-correlation without the need for canonical variables (see Eq.~\ref{eq:correlation}).

\paragraph{Connection with language isomorphism.}
In graph theory, two graphs are called isomorphic when the two corresponding adjacency matrices are permutation similar. According to Eq.~\ref{eq:graph}, our solution aims to minimize the difference between adjacency matrices, and as such lays the foundation of  graph isomorphism---which is termed language isomorphism in the multilingual community. Taken together, our solution allows for yielding isomorphic multilingual subspaces for non-isomorphic languages such as typologically dissimilar languages.

\section{Experiments}
\label{sec:results}
 
\subsection{Baselines and Our Approach}

\paragraph{Supervised alignments.} 
(a) Rotation~\citep{aldarmaki:2019, zhao:2020}: a linear orthogonal-constrained transformation;
(b) GBDD~\citep{zhao:2020-refeval}: subtracting a global language bias vector from multilingual representations; (c) FCNN: an architecture that contains three fully-connected layers followed by a tanh activation function each; (d) Joint-Align~\citep{Cao:2020}: jointly aligning many languages via fine-tuning; (e) InfoXLM~\citep{infoxlm}: finetuning multilingual representations with translation language modeling and contrastive learning; (f) our Real-NVP.

\paragraph{Unsupervised alignments.} 
(a) MUSE: the unsupervised variant of Rotation~\citep{lample:2018};
(b) VecMap: a heuristic unsupervised approach based on the assumption that word translations have similar distributions on word similarities~\citep{vecmap:2018};
(c) vector space normalization~\citep{zhao:2020}: removing language-specific means and variances of multilingual representations.
MUSE and VecMap are popular 
unsupervised alignments on static embeddings; (d) our GAN-Real-NVP.
For bootstrapping procedure, we use the notation: [Method]$+$[Constraint], where [Method] is MUSE or GAN-Real-NVP, and [Constraint] is Cross-Correlation or Graph-Structure or Procrustes---known to enhance unsupervised alignment on static embeddings. 

Except for Joint-Align, InfoXLM, and Normalization, the others are trained individually across language pairs.

\subsection{Validation Criterion}
We present two validation criteria, and compare them with no-criteria (i.e., training for several epochs) in both supervised and unsupervised settings. In particular, we induce the 30k most confident word translations from monolingual data with CSLS, and then compute the two following criteria on these word translations.

\begin{itemize}
    \item \textit{Semantic criterion} was proposed for guiding the training of unsupervised alignment on static embeddings. \citet{lample:2018} assemble the 10k most frequent source words and generate target translations of these words. Next, they average cosine similarities on these translation pairs treated as validation criterion.
    
    \item \textit{Structural Criterion}: we compute the difference between two ordered lists of singular values obtained from source and target word embeddings pertaining to the 30k most confident word translations. This criterion was initially proposed to measure language isomorphism \citep{dubossarsky:2020}.

\end{itemize}

\subsection{Simulation}
\label{sec:simulation}

Bilingual Lexicon Induction (BLI) is a popular internal task known to evaluate alignment on static embeddings, as it covers ca. 100 language pairs and focuses on the understanding of the alignment itself other than its impact on external tasks. In particular, BLI bases the induction of bilingual lexicons on static word embeddings, and compares the induced lexicons with gold lexicons.

However, contextual embeddings lack such evaluation tasks.
As \citet{artetxe-etal-2020-call} state, when not evaluated under similar conditions, the lessons learned from static embeddings cannot transfer to contextual ones. To this end, we perform simulation to construct synthetic data as the contextual extension of BLI (CBLI), which focuses on evaluating the alignment of multilingual subspaces of contextualized embeddings.

We split CBLI data to train, validation and test sets, and report Precision@K, as in BLI evaluation.
Our creation procedure is two-fold: First, we sample source  embeddings from a two-dimensional Gaussian (normal) mixture distribution, and then perform different 
transformations on them to produce target embeddings. By doing so, we mimic typologically dissimilar languages---see Figure~\ref{fig:example}.

\insertExample

As for the construction of simulation setups, we adjust three parameters: (a) the occurrence for a word $k$, $k \in \{5,10,\dots,100\}$---we use $20$ words in all setups; (b) the degree of language isomorphism $t \in \{1,\ldots,10\}$---which mimics different language pairs and (c) the distance $\epsilon$ between embeddings in train and test sets, $\epsilon \in \{0, 0.2,\dots,5\}$---which reflects different similarities between train and test domains. For the $i$-th word, in order to reflect word occurrence, we sample contextualized embeddings from a normal distribution $\mathcal{N}(\mu_i, \mtx{I})$ for train sets,  and from  $\mathcal{N}(\mu_i + \epsilon, \mtx{I})$ for validation and test sets, \wei{based on the insights from the visualized m-BERT space: different instances of a word appear to follow a normal distribution \citep{Cao:2020}.} $\mu_i$ denotes a mean vector sampled uniformly from $[-5, 5]$ for each component. For isomorphic languages ($t=1$), we transform source into target embeddings with a rotation matrix. For non-isomorphic languages ($t>1$), \wei{we alternate rotation with translation $t$ times, assuming the more often we alternate, the more dissimilar two languages become.}

\paragraph{Generalization to unseen words.}
\insertMain

Research showed that word frequency has a big impact on task performance for static embeddings \citep{Ruder:2019}. However, Figure~\ref{fig:main-simulation}~(a)+(b) show that, in the contextual case, task performance often does not correlate with word frequency but strongly correlates with domain similarities between train and test sets. \wei{On a side note, Figure~\ref{fig:main-simulation}(b) shows that Rotation correlates poorly with embedding distance in ``isomorphy'', but rather highly in ``non-isomorphy''. This is because isomorphic spaces can be perfectly aligned via Rotation, independent of the degree of embedding distance. For non-isomorphic spaces, the bigger the embedding distance is, the worse Rotation performs, which results in a high absolute correlation.}

Analyses by \citet{glavas-etal-2019-properly} showed that linear alignments are much better than non-linear counterparts on static embeddings. In the following, we contrast linear with non-linear alignments on contextualized embeddings, aiming to understand in which cases one is superior to another.

In isomorphic settings, Figure~\ref{fig:main-simulation}~(c) shows that linear alignments, Rotation and MUSE, clearly win in both supervised and unsupervised settings. This means a simple, linear transformation is sufficient to align vector spaces for isomorphic languages. We mark this as a sanity test, as languages mostly are non-isomorphic~\citep{sogaard:2018}.

In non-isomorphic settings, Figure~\ref{fig:main-simulation}~(d) shows that non-linear alignments, Real-NVP and GAN-Real-NVP, win by a large margin when train and test domains are similar. However, when train and test domains are dissimilar, linear alignments are indeed better. As such, non-linear alignments suffer from the issue of generalization.

Overall, we show that alignment on static and contextual embeddings  yield different conclusions on \emph{word frequency} and \emph{the superiority of linear over non-linear alignments}. By contrasting them, we hope to provide better understanding on each.

\insertDensity
\paragraph{Importance of bootstrapping procedure for unsupervised alignment.}
Figure~\ref{fig:density-fine-tuning}~(I) shows how well two languages are aligned when train and test domains are similar. In this context, Real-NVP and GAN-Real-NVP win, and the resulting vector spaces are better overlapped (aligned) than others.
This confirms the effectiveness of our density-based approaches.
However, we still see a big performance gap between supervised and unsupervised approaches, especially for Real-NVP (64.3) vs.~GAN-Real-NVP (51.3),  notwithstanding large overlap in subspaces and small differences in model architectures. This confirms that density matching alone is not sufficient. Figure~\ref{fig:density-fine-tuning}~(II) shows that after bootstrapping GAN-Real-NVP rivals 
Real-NVP. We also see similar results by contrasting Rotation with MUSE. Cross-correlation helps best in all cases, while graph-structure and Procrustes yield less consistent gains across approaches.

Overall, these results show that bootstrapping procedure plays a vital role in order for unsupervised alignments to rival supervised counterparts.

\subsection{Experiments on Real Data}
\label{sec:real-data}
XTREME~\citep{XTREME:2020} has recently become popular for evaluating multilingual representations. However, it does not address word-level alignment as CBLI and BLI do, but rather focus on how multilingual representations impact cross-lingual systems. 
In this work, we evaluate both internal and external strengths of alignment, i.e., the internal alignment results on CBLI,
and the impact of alignment on external tasks: (i) Align, RFEval and Tatoeba that require no supervised classifiers, and (ii) XNLI that requires a supervised classifier. We outline these tasks in the following:

\begin{itemize}
    \item CBLI is the contextualized extension of BLI.
    Both contain a bilingual lexicon per language pair, but CBLI marks each occurrence of a word as an entry in lexicon. For each language pair, we extract 10k word translations from parallel sentences using FastAlign ~\citep{dyer:2013}.  
    We report Precision@1. Note that we provide two complementary CBLI data: one is gold but simulated, while the other is real  but contains noises.
    
    \item Alignment (Align) is a bilingual word retrieval task.
    Each language pair contains gold standard 2.5k word translations annotated by human experts. We use SimAlign~\citep{simalign:2020} to retrieve word translations from parallel sentences based on contextualized word embeddings. We report F-score that combines precision and recall.
    
    \item Reference-free evaluation (RFEval) measures the Pearson correlation between human and automatic judgments of translation quality. 
    We use XMover~\citep{zhao:2019, zhao:2020-refeval} to yield automatic judgment, which compares system translation with source sentence based on contextualized word embeddings. We exclude the target-side language model from XMover. Each language pair contains 3k source sentences.
    
    \item Tatoeba is a bilingual sentence retrieval task taken from XTREME. Each language pair contains 1k sentence translations. Given a source sentence, we retrieve the nearest translation from a pool of candidates based on cosine similarities between sentence embeddings. We report Precision@1.
    
    \item XNLI is a cross-lingual transfer task taken from XTREME, which aims to infer the relationship between a sentence pair of premise and hypothesis. Often, XNLI is evaluated in a zero-shot transfer setup, which measures the transfer ability from source to target languages, with cross-lingual systems trained on source language only. We report accuracy.

\end{itemize}

Tatoeba, CBLI and RFEval consist of six languages: German, Czech, Latvian, Finnish, Russian and Turkish, paired to English. In this work, we train alignments for these language pairs. Align considers two language pairs: German/Czech-to-English, and XNLI considers three: English-to-German/Russian/Turkish, as the other languages are not available. We consider two choices of multilingual representations: m-BERT and XLM-R. 

\paragraph{Setup.} To contrast supervised with unsupervised approaches, we consider two data scenarios: (i) limited parallel data and (ii) no parallel data. In case (i), we sample 20k (compared to ca. 250k often used in previous studies) parallel sentences from News-Commentary~\citep{news-commentary:2012} for Russian/Turkish-to-English, and from EuroParl~\citep{koehn:2005} for other languages.  We use FastAlign to induce word translations from parallel sentences. Building upon these translations, we construct a permutation matrix $\mtx{P}$ as cross-lingual supervision. In case (ii), we unpair the word translations obtained from (i) by removing the use for the permutation matrix. As such, we compare supervised and unsupervised approaches under similar conditions,  \emph{viz.}, with similar scale of data.

\paragraph{How to select the best model.}
We compare two choices of 
model selection: 
(i) CBLI as validation data and (ii) our validation criteria.

Figure~\ref{fig:validation-criterion}~(I) shows that  the results on CBLI and on other tasks correlate poorly (even negatively) in both supervised and unsupervised settings. This means validation data is inappropriate for guiding both supervised and unsupervised training. 

\insertValidationCriterion
\insertTaskCorrelation

Table~\ref{tab:task-correlation} reports correlation statistics across approaches, showing that the model performances across tasks exhibit negative correlations in about $30\%$ setups, i.e., the better the alignment performs in one task, the worse it performs in the other. This means that (i) guiding contextualized alignment with validation criteria is challenging, and (ii) the evaluation of validation criteria should consider the performances in all tasks. 
As a running example, we evaluate the two validation criteria and validation data (CBLI) treated as criterion, based on Real-NVP. Figure~\ref{fig:validation-criterion}~(II) shows that both validation criteria correlate much better than validation data with the model performances across tasks. Further, \emph{semantic criterion} wins by a large margin (0.86 versus 0.44 and -0.12). This means \emph{semantic criterion} is the best option for guiding Real-NVP. We also see similar results on other approaches. Thus, we adopt \emph{semantic criterion} to perform model selection in all setups. 

Overall, these results show that (i) not only are validation criteria important for guiding unsupervised training, but also for guiding supervised training, and (ii) the evaluation of validation criteria should be based on the model performances in all tasks.

\subsection{Results on Real Data}

Table~\ref{tab:main-task-performances} contrasts unsupervised with supervised approaches. For ease of reading, we provide the average results across languages, and break them down into individual languages in Table~\ref{tab:full-results} (appendix).

\insertTableTaskPerformance

\paragraph{Supervised settings.} 
FCNN and Real-NVP training for 5 epochs are worse than those training with \emph{semantic criterion} in almost all setups. This demonstrates the importance of validation criteria. We find that, though Joint-Align training with 100k parallel data wins on internal CBLI, it is worse than Real-NVP training with (i) merely 20k data and (ii) \emph{semantic criterion} in the external tasks. This means  Joint-Align overfits CBLI. We see similar results by contrasting VecMap with GAN-Real-NVP. 

We see that the gains from all alignments on Align are much smaller than on others. This might be because SimAlign (used to induce word alignments) or the dataset cannot recognize the improved contextualized embeddings after alignment.

Real-NVP seems the strongest approach, which helps considerably for both m-BERT and XLM-R and surpasses recent InfoXLM by a large margin. InfoXLM training with much larger parallel data for 150k training steps cannot show advantages in the absence of validation criteria. \wei{Note that we do not apply validation criteria to Joint-Align and InfoXLM for further improvements, as these resource-intensive approaches have not been designed for low-resource languages, e.g., that the improvements by Joint-Align appear to vanish in the setup of limited parallel data~\cite{Cao:2020}.}

\paragraph{Unsupervised settings.} 
Validation criteria are crucial: MUSE and GAN-Real-NVP with \emph{semantic criterion} largely outperform those training for 5 epochs.

Much unlike the results in simulation, we see small performance gaps between supervised and unsupervised approaches, such as the gap between Real-NVP and GAN-Real-NVP (2 points vs 13 points in simulation). Thus, it is not surprising that the gains from the bootstrapping procedure are small in these tasks. Overall, we see \emph{cross-correlation} is better than \emph{graph structure} on MUSE and GAN-Real-NVP. 
The results for Procrustes are similar as in simulation---it improves MUSE but harms GAN-Real-NVP. GAN-Real-NVP training with \emph{semantic criterion} and \emph{cross-correlation} always wins, rivaling the best supervised approach Real-NVP.

Overall, these supervised and unsupervised results show that (i) validation criterion plays an essential role; (ii) density-based approaches targeting density matching and density modeling are effective in both supervised and unsupervised settings, and (iii) after bootstrapping unsupervised approaches are able to rival supervised counterparts. 

\paragraph{Downstream Task.} 

\begin{table}[!ht]
\centering
\footnotesize
\begin{tabular}{@{} l | c c c @{} }
\toprule
& \multicolumn{3}{c}{m-BERT} \\
Alignments  & DE & RU & TR \\
\midrule
Original & 70.3 & 68.2 & 60.0 \\
\midrule
Rotation-20k & 70.6 & 68.1 & 60.5\\
FCNN-20k (semantic criterion) & 70.8 & 68.1 & 59.9 \\
Real-NVP-20k (semantic criterion) & \textbf{73.6} & \textbf{72.7} & \textbf{62.9} \\
Joint-Align-100k (3 epochs) & \textbf{72.9} & \textbf{72.1} & \textbf{62.4} \\
\midrule
GAN-R-NVP-20k (semantic criterion) & \textbf{73.5} & \textbf{72.2} & 61.5 \\
+ Cross-Correlation  & \textbf{73.4} & \textbf{72.1} & 61.3 \\
+ Graph Structure & \textbf{73.4} & \textbf{72.3} & 61.2\\
+ Procrustes & 70.4 & 68.3 & 60.2 \\
\bottomrule
\end{tabular}
\caption{Results on XNLI in a zero-shot cross-lingual setup from English to German/Russian/Turkish. After rectifying m-BERT with alignments we finetune m-BERT (coupled with a supervised classifier) on XNLI English train data.
We restrict the evaluation to 3 languages, as the other languages on XNLI are not covered in our alignments. 
\label{tab:results-xnli}}
\end{table}
Table \ref{tab:results-xnli} shows the impacts of our approaches coupled with a supervised 
classifier to perform zero-shot text classification on the downstream task XNLI.
While Rotation and FCNN yield better results in Table \ref{tab:main-task-performances}, their impacts 
vanish on XNLI. This might be because 
(i) rectifying m-BERT with 20k parallel data is not adequate to reflect improvements on downstream tasks, or (ii) alignment results may be orthogonal to downstream zero-shot performance. However, 
Real-NVP and GAN-Real-NVP trained on the same scale of data with Rotation and FCNN exhibit strong impacts on XNLI, 
on par with Joint-Align trained with 100k parallel data. 
Thus, 20k data is sufficient for our approaches to yield 
improvements on XNLI. 

\section{Conclusion}

Given resource and typology disparities across languages, multilingual representations exhibit unequal capabilities between languages. Research showed that contextualized alignments overcome this challenge by producing language-agnostic representations. However, these techniques demand large parallel data, and thus cannot address the data scarcity issue in low-resource languages. 
Our contributions in this work are manifold. We start by introducing supervised and unsupervised density-based approaches, Real-NVP and GAN-Real-NVP, both dissecting the alignment of multilingual subspaces into density matching and density modeling in order to sufficiently leverage data. Second, we investigate the usefulness of validation criteria for guiding the training process of our approaches. 
Further, we present a bootstrapping procedure to enhance our unsupervised approach, which is theoretically grounded for promoting equality of multilingual subspaces. We demonstrated the effectiveness of our alignments in the scenarios of limited and no parallel data. With 20k parallel data we provided, our supervised approach mostly outperforms Joint-Align and InfoXLM trained on much larger parallel data. Next, we showed that validation criteria are imperative for guiding both supervised and unsupervised training. 
Finally, 
we demonstrated that parallel data could be removed without the loss of model performances after integrating our unsupervised approach in the bootstrapping procedure.

\section{Broader Impact}
\label{sec:broader-impact}
As a class of domain adaptation techniques, density-based approaches have been shown useful in a range of cross-domain applications, such as image-captioning~\citep{mahajan2020latent}, image-to-image translation~\citep{alignflow:2020}, alignment on static embeddings~\citep{zhou-etal-2019-density} and machine translation~\citep{setiawan-etal-2020-variational}. In this work, we showed that (i) density-based approaches could overfit validation data in the absence of validation criteria, and are weak in generalization (see \S\ref{sec:simulation}), but (ii) bootstrapping procedures can improve these density-based approaches. While our analyses are limited in scope to contextualized alignment as the only cross-domain application, we hope that our results fuel future research towards effective domain adaptation techniques in other applications.

\section*{Acknowledgments}
We thank the anonymous reviewers for their thoughtful comments that greatly
improved the final texts. We also
thank Dan Liu for her early experiments in the word alignment task.
This work has been supported by the German Research Foundation as part of the Research Training
Group Adaptive Preparation of Information from Heterogeneous Sources (AIPHES) at the Technische
Universit\"at Darmstadt under grant No. GRK 1994/1 and the Klaus Tschira Foundation, Heidelberg, Germany. 

\bibliography{acml22, custom}

\begin{thebibliography}{35}
\providecommand{\natexlab}[1]{#1}
\providecommand{\url}[1]{\texttt{#1}}
\expandafter\ifx\csname urlstyle\endcsname\relax
  \providecommand{\doi}[1]{doi: #1}\else
  \providecommand{\doi}{doi: \begingroup \urlstyle{rm}\Url}\fi

\bibitem[Aldarmaki and Diab(2019)]{aldarmaki:2019}
Hanan Aldarmaki and Mona Diab.
\newblock Context-aware cross-lingual mapping.
\newblock In \emph{NAACL}, 2019.

\bibitem[Alqahtani et~al.(2021)Alqahtani, Lalwani, Zhang, Romeo, and
  Mansour]{ot:2021}
Sawsan Alqahtani, Garima Lalwani, Yi~Zhang, Salvatore Romeo, and Saab Mansour.
\newblock Using optimal transport as alignment objective for fine-tuning
  multilingual contextualized embeddings.
\newblock In \emph{EMNLP}, 2021.

\bibitem[Arjovsky et~al.(2017)Arjovsky, Chintala, and Bottou]{wgan:2017}
Martin Arjovsky, Soumith Chintala, and L{\'e}on Bottou.
\newblock {W}asserstein generative adversarial networks.
\newblock In \emph{ICML}, 2017.

\bibitem[Artetxe et~al.(2017)Artetxe, Labaka, and
  Agirre]{artetxe-etal-2017-learning}
Mikel Artetxe, Gorka Labaka, and Eneko Agirre.
\newblock Learning bilingual word embeddings with (almost) no bilingual data.
\newblock In \emph{ACL}, Vancouver, Canada, July 2017.

\bibitem[Artetxe et~al.(2018)Artetxe, Labaka, and Agirre]{vecmap:2018}
Mikel Artetxe, Gorka Labaka, and Eneko Agirre.
\newblock A robust self-learning method for fully unsupervised cross-lingual
  mappings of word embeddings.
\newblock In \emph{ACL}, Melbourne, Australia, July 2018.

\bibitem[Artetxe et~al.(2020)Artetxe, Ruder, Yogatama, Labaka, and
  Agirre]{artetxe-etal-2020-call}
Mikel Artetxe, Sebastian Ruder, Dani Yogatama, Gorka Labaka, and Eneko Agirre.
\newblock A call for more rigor in unsupervised cross-lingual learning.
\newblock In \emph{ACL}, pages 7375--7388, Online, July 2020.

\bibitem[Cao et~al.(2020)Cao, Kitaev, and Klein]{Cao:2020}
Steven Cao, Nikita Kitaev, and Dan Klein.
\newblock Multilingual alignment of contextual word representations.
\newblock In \emph{ICLR}, 2020.

\bibitem[Chi et~al.(2021)Chi, Dong, Wei, Yang, Singhal, Wang, Song, Mao, Huang,
  and Zhou]{infoxlm}
Zewen Chi, Li~Dong, Furu Wei, Nan Yang, Saksham Singhal, Wenhui Wang, Xia Song,
  Xian-Ling Mao, Heyan Huang, and Ming Zhou.
\newblock {I}nfo{XLM}: An information-theoretic framework for cross-lingual
  language model pre-training.
\newblock In \emph{NAACL}, 2021.

\bibitem[Conneau et~al.(2020)Conneau, Khandelwal, Goyal, Chaudhary, Wenzek,
  Guzm{\'a}n, Grave, Ott, Zettlemoyer, and
  Stoyanov]{conneau-etal-2020-unsupervised}
Alexis Conneau, Kartikay Khandelwal, Naman Goyal, Vishrav Chaudhary, Guillaume
  Wenzek, Francisco Guzm{\'a}n, Edouard Grave, Myle Ott, Luke Zettlemoyer, and
  Veselin Stoyanov.
\newblock Unsupervised cross-lingual representation learning at scale.
\newblock In \emph{ACL}, Online, July 2020.

\bibitem[Czarnowska et~al.(2019)Czarnowska, Ruder, Grave, Cotterell, and
  Copestake]{Ruder:2019}
Paula Czarnowska, Sebastian Ruder, Edouard Grave, Ryan Cotterell, and Ann
  Copestake.
\newblock Don{'}t forget the long tail! a comprehensive analysis of
  morphological generalization in bilingual lexicon induction.
\newblock In \emph{EMNLP}, 2019.

\bibitem[Devlin et~al.(2019)Devlin, Chang, Lee, and
  Toutanova]{devlin-etal-2019-bert}
Jacob Devlin, Ming-Wei Chang, Kenton Lee, and Kristina Toutanova.
\newblock {BERT}: Pre-training of deep bidirectional transformers for language
  understanding.
\newblock In \emph{NAACL}, 2019.

\bibitem[Dinh et~al.(2017)Dinh, Sohl{-}Dickstein, and Bengio]{Dinh:2017}
Laurent Dinh, Jascha Sohl{-}Dickstein, and Samy Bengio.
\newblock Density estimation using real {NVP}.
\newblock In \emph{ICLR}, 2017.

\bibitem[Dubossarsky et~al.(2020)Dubossarsky, Vuli{\'c}, Reichart, and
  Korhonen]{dubossarsky:2020}
Haim Dubossarsky, Ivan Vuli{\'c}, Roi Reichart, and Anna Korhonen.
\newblock The secret is in the spectra: Predicting cross-lingual task
  performance with spectral similarity measures.
\newblock In \emph{EMNLP}, 2020.

\bibitem[Dyer et~al.(2013)Dyer, Chahuneau, and Smith]{dyer:2013}
Chris Dyer, Victor Chahuneau, and Noah~A. Smith.
\newblock A simple, fast, and effective reparameterization of {IBM} model 2.
\newblock In \emph{NAACL}, 2013.

\bibitem[Faruqui and Dyer(2014)]{faruqui:2014}
Manaal Faruqui and Chris Dyer.
\newblock Improving vector space word representations using multilingual
  correlation.
\newblock In \emph{EACL}, 2014.

\bibitem[Glava{\v{s}} et~al.(2019)Glava{\v{s}}, Litschko, Ruder, and
  Vuli{\'c}]{glavas-etal-2019-properly}
Goran Glava{\v{s}}, Robert Litschko, Sebastian Ruder, and Ivan Vuli{\'c}.
\newblock How to (properly) evaluate cross-lingual word embeddings: On strong
  baselines, comparative analyses, and some misconceptions.
\newblock In \emph{ACL}, Florence, Italy, July 2019.

\bibitem[Goodfellow et~al.(2014)Goodfellow, Pouget-Abadie, Mirza, Xu,
  Warde-Farley, Ozair, Courville, and Bengio]{goodfellow:2014}
Ian Goodfellow, Jean Pouget-Abadie, Mehdi Mirza, Bing Xu, David Warde-Farley,
  Sherjil Ozair, Aaron Courville, and Yoshua Bengio.
\newblock Generative adversarial nets.
\newblock In \emph{NeurIPS}, 2014.

\bibitem[Grover et~al.(2020)Grover, Chute, Shu, Cao, and Ermon]{alignflow:2020}
Aditya Grover, Christopher Chute, Rui Shu, Zhangjie Cao, and Stefano Ermon.
\newblock Alignflow: Cycle consistent learning from multiple domains via
  normalizing flows.
\newblock In \emph{AAAI}, 2020.

\bibitem[Hu et~al.(2020)Hu, Ruder, Siddhant, Neubig, Firat, and
  Johnson]{XTREME:2020}
Junjie Hu, Sebastian Ruder, Aditya Siddhant, Graham Neubig, Orhan Firat, and
  Melvin Johnson.
\newblock {XTREME}: A massively multilingual multi-task benchmark for
  evaluating cross-lingual generalisation.
\newblock In \emph{ICML}, 2020.

\bibitem[Jalili~Sabet et~al.(2020)Jalili~Sabet, Dufter, Yvon, and
  Sch{\"u}tze]{simalign:2020}
Masoud Jalili~Sabet, Philipp Dufter, Fran{\c{c}}ois Yvon, and Hinrich
  Sch{\"u}tze.
\newblock {S}im{A}lign: High quality word alignments without parallel training
  data using static and contextualized embeddings.
\newblock In \emph{Findings of EMNLP}, 2020.

\bibitem[Koehn(2005)]{koehn:2005}
Philipp Koehn.
\newblock Europarl: A parallel corpus for statistical machine translation.
\newblock In \emph{MT summit}, volume~5. Citeseer, 2005.

\bibitem[Lample et~al.(2018)Lample, Conneau, Ranzato, Denoyer, and
  Jégou]{lample:2018}
Guillaume Lample, Alexis Conneau, Marc'Aurelio Ranzato, Ludovic Denoyer, and
  Hervé Jégou.
\newblock Word translation without parallel data.
\newblock In \emph{ICLR}, 2018.

\bibitem[Libovick{\'y} et~al.(2020)Libovick{\'y}, Rosa, and
  Fraser]{libovicky-etal-2020-language}
Jind{\v{r}}ich Libovick{\'y}, Rudolf Rosa, and Alexander Fraser.
\newblock On the language neutrality of pre-trained multilingual
  representations.
\newblock In \emph{Findings of EMNLP}, Online, November 2020.

\bibitem[Mahajan et~al.(2020)Mahajan, Gurevych, and Roth]{mahajan2020latent}
Shweta Mahajan, Iryna Gurevych, and Stefan Roth.
\newblock Latent normalizing flows for many-to-many cross-domain mappings.
\newblock In \emph{ICLR}, 2020.

\bibitem[Mengzhou et~al.(2021)Mengzhou, Zheng, Mukherjee, Shokouhi, Newbig, and
  Awadallah]{xia2021metaxl}
Xia Mengzhou, Guoqing Zheng, Subhabrata Mukherjee, Milad Shokouhi, Graham
  Newbig, and Ahmed~Hassan Awadallah.
\newblock Metaxl: Meta representation transformation for low-resource
  cross-lingual learning.
\newblock 2021.

\bibitem[Pires et~al.(2019)Pires, Schlinger, and Garrette]{pires:2019}
Telmo Pires, Eva Schlinger, and Dan Garrette.
\newblock How multilingual is multilingual {BERT}?
\newblock In \emph{ACL}, Florence, Italy, July 2019.

\bibitem[Rezende and Mohamed(2015)]{nf:2015}
Danilo Rezende and Shakir Mohamed.
\newblock Variational inference with normalizing flows.
\newblock In \emph{ICML}, 2015.

\bibitem[Setiawan et~al.(2020)Setiawan, Sperber, Nallasamy, and
  Paulik]{setiawan-etal-2020-variational}
Hendra Setiawan, Matthias Sperber, Udhyakumar Nallasamy, and Matthias Paulik.
\newblock Variational neural machine translation with normalizing flows.
\newblock In \emph{ACL}, July 2020.

\bibitem[S{\o}gaard et~al.(2018)S{\o}gaard, Ruder, and Vuli{\'c}]{sogaard:2018}
Anders S{\o}gaard, Sebastian Ruder, and Ivan Vuli{\'c}.
\newblock On the limitations of unsupervised bilingual dictionary induction.
\newblock In \emph{ACL}, 2018.

\bibitem[Tiedemann(2012)]{news-commentary:2012}
Jörg Tiedemann.
\newblock Parallel data, tools and interfaces in opus.
\newblock In \emph{LREC}, 2012.

\bibitem[Wu and Dredze(2020)]{wu-dredze-2020-explicit}
Shijie Wu and Mark Dredze.
\newblock Do explicit alignments robustly improve multilingual encoders?
\newblock In \emph{EMNLP}, Online, November 2020.

\bibitem[Zhao et~al.(2019)Zhao, Peyrard, Liu, Gao, Meyer, and Eger]{zhao:2019}
Wei Zhao, Maxime Peyrard, Fei Liu, Yang Gao, Christian~M. Meyer, and Steffen
  Eger.
\newblock {M}over{S}core: Text generation evaluating with contextualized
  embeddings and earth mover distance.
\newblock In \emph{EMNLP}, Hong Kong, China, November 2019.

\bibitem[Zhao et~al.(2020)Zhao, Glava{\v{s}}, Peyrard, Gao, West, and
  Eger]{zhao:2020-refeval}
Wei Zhao, Goran Glava{\v{s}}, Maxime Peyrard, Yang Gao, Robert West, and
  Steffen Eger.
\newblock On the limitations of cross-lingual encoders as exposed by
  reference-free machine translation evaluation.
\newblock In \emph{ACL}, 2020.

\bibitem[Zhao et~al.(2021)Zhao, Eger, Bjerva, and Augenstein]{zhao:2020}
Wei Zhao, Steffen Eger, Johannes Bjerva, and Isabelle Augenstein.
\newblock Inducing language-agnostic multilingual representations.
\newblock In \emph{*SEM}, Online, August 2021.

\bibitem[Zhou et~al.(2019)Zhou, Ma, Wang, and Neubig]{zhou-etal-2019-density}
Chunting Zhou, Xuezhe Ma, Di~Wang, and Graham Neubig.
\newblock Density matching for bilingual word embedding.
\newblock In \emph{NAACL}, Minneapolis, Minnesota, June 2019.

\end{thebibliography}

\appendix
\clearpage

\begin{sidewaystable*}
    \centering
    \scriptsize
    \setlength{\tabcolsep}{3pt}
\begin{tabular}{l| ccccc ccccc ccccccc}
    \toprule
            Lang & Original & Rotation & FCNN & NVP & GBDD & Joint-Align & NORM & MUSE & MUSE$+$C & MUSE$+$G & MUSE$+$P & GNVP & GNVP$+$C & GNVP$+$G & GNVP$+$P & infoXLM & VecMap\\
            \midrule
                        
            \multicolumn{16}{l}{\textit{Reference-free Evaluation (m-BERT)}}\\
            cs-en &23.6& 32.5& 36.2& 30.2& 24.4& 32.5& 22.5& 24.8& 29.7& 26.7& 30.4& 24.9& 25.3& 25.2& 30.5 &- & 25.0\\
            de-en &29.8& 31.6& 33.1& 34.2& 30.0& 39.9& 31.4& 29.9& 31.3& 31.1& 31.2& 32.8& 33.0& 32.8& 31.3&- & 31.8\\
            fi-en &30.9& 48.5& 51.3& 51.8& 32.5& 48.6& 32.3& 37.9& 44.0& 40.4& 46.0& 37.2& 39.5& 37.9& 46.1&- & 39.1\\
            lv-en &23.0& 44.0& 50.2& 52.1& 25.7& 46.1& 31.3& 33.4& 35.9& 33.5& 40.9& 38.0& 40.7& 38.8& 41.0&-& 24.9\\
            ru-en &19.4& 23.4& 30.7& 33.4& 21.4& 31.7& 23.9& 19.8& 22.0& 21.3& 22.9& 26.5& 27.0& 26.3& 22.9&-&18.7\\
            tr-en &36.7& 52.4& 54.8& 52.2& 38.6& 48.6& 39.1& 41.6& 48.6& 46.3& 49.7& 44.0& 46.5& 44.9& 49.8&-&44.5\\
        \midrule
            \multicolumn{16}{l}{\textit{Reference-free Evaluation (XLM-R)}}\\
            cs-en &20.6& 26.3& 26.1& 32.9& 22.4& - & 28.0& 20.4& 24.2& 22.3& 24.6& 29.2& 30.1& 29.1& 25.1 & 26.6\\
            de-en &25.1& 27.4& 28.2& 40.5& 25.4& - &39.5& 25.2& 26.5& 26.0& 26.6& 40.4& 40.4& 40.5& 27.2 & 37.3\\
            fi-en &26.6& 39.9& 44.6& 47.7& 26.0& - &41.6& 29.0& 35.7& 32.3& 37.8& 44.3& 47.2& 44.5& 38.2 & 36.1\\
            lv-en &28.3& 42.3& 48.6& 51.2& 29.6& - &44.1& 30.8& 37.9& 32.7& 40.7& 47.3& 48.7& 47.4& 40.9 & 38.3\\
            ru-en &21.4& 25.6& 27.7& 38.0& 21.5& - &34.3& 21.5& 23.3& 22.4& 24.6& 35.9& 36.3& 36.1& 25.1 & 34.9\\
            tr-en &36.5& 46.5& 44.8& 54.7& 37.9& - &49.6& 38.4& 44.7& 41.2& 45.9& 53.2& 52.9& 51.7& 46.2 & 52.6\\
        \midrule
            \multicolumn{16}{l}{\textit{Tatoeba (m-BERT)}}\\
            cs-en &47.5& 53.1& 61.4& 61.3& 48.3& 51.1& 60.6& 48.2& 50.8& 48.7& 51.6& 60.6& 61.3& 60.7& 51.5 &- & 47.5\\
            de-en &76.6& 79.5& 84.5& 87.3& 78.6& 89.6& 86.0& 77.7& 79.3& 78.7& 78.9& 86.2& 86.5& 86.0& 78.8 &- & 81.2\\
            fi-en &41.5& 50.6& 57.2& 54.3& 46.4& 70.4& 53.5& 44.7& 46.5& 45.2& 47.8& 53.9& 56.1& 54.6& 47.5 &- & 53.5\\
            lv-en &31.7& 39.8& 47.4& 51.2& 35.0& 31.1& 44.0& 35.3& 35.0& 33.8& 37.1& 44.1& 44.7& 43.8& 37.0 &- & 33.5\\
            ru-en &63.0& 65.7& 70.3& 72.6& 64.8& 74.2& 75.4& 63.5& 64.9& 64.7& 66.3& 75.1& 76.0& 75.1& 66.3 &-& 66.5\\
            tr-en &35.8& 43.0& 46.3& 50.5& 40.6& 38.4& 48.2& 39.1& 40.9& 38.8& 43.1& 47.3& 49.3& 48.1& 42.6 &- & 47.6\\            
        \midrule
            \multicolumn{16}{l}{\textit{Tatoeba (XLM-R)}}\\
            cs-en &49.8& 57.5& 74.6& 72.9& 59.2& - &72.5& 53.8& 58.9& 60.6& 59.1& 72.8& 74.0& 73.3& 57.6 & 69.6\\
            de-en &89.0& 91.0& 94.5& 94.9& 90.1& - &95.2& 88.7& 91.0& 89.6& 90.8& 94.6& 95.2& 94.8& 90.5 & 95.5\\
            fi-en &63.8& 68.6& 81.3& 80.8& 69.3& - &79.6& 65.6& 69.8& 67.2& 68.8& 80.6& 81.2& 80.6& 67.7 & 68.4\\
            lv-en &52.2& 61.3& 72.6& 72.4& 58.9& - &71.9& 56.4& 63.1& 59.2& 61.0& 71.5& 71.5& 71.7& 59.7 & 51.2\\
            ru-en &70.2& 71.5& 83.1& 83.9& 73.2& - &83.9& 71.2& 70.6& 70.2& 72.2& 83.4& 84.2& 84.0& 70.9 & 85.7\\
            tr-en &55.4& 61.7& 75.1& 75.6& 62.4& - &75.4& 58.6& 62.0& 63.2& 61.0& 75.1& 76.9& 75.7& 61.3 & 86.1\\
        \midrule
            \multicolumn{16}{l}{\textit{Contextual BLI (m-BERT)}}\\
            cs-en&47.3&52.6&49.2&52.3&47.7&57.0&50.1&47.7&50.0&48.4&50.8&51.8&52.0&51.5&51.0 &- &55.1\\
            de-en&61.0&64.6&59.0&63.8&60.9&79.4&63.3&61.8&61.8&62.1&62.5&64.8&64.8&64.7&63.0 &-&71.7\\
            fi-en&36.8&48.9&45.8&45.8&37.9&74.2&43.0&40.7&40.6&40.7&43.4&45.0&47.6&44.7&44.6 &-&56.7\\
            lv-en&42.7&51.8&49.5&50.6&43.5&49.8&47.9&45.5&44.5&45.9&48.7&49.9&50.5&49.7&49.5 &-&53.9\\
            ru-en&66.5&70.9&67.6&71.4&67.0&67.4&69.2&67.2&67.5&67.8&69.5&70.5&71.0&70.6&69.9 &-&76.9\\
            tr-en&51.1&61.9&60.7&61.8&51.5&60.2&55.8&52.0&52.8&53.7&56.4&60.2&62.1&58.7&58.3 &-&70.4\\
        \midrule
            \multicolumn{16}{l}{\textit{Contextual BLI (XLM-R)}}\\     
            cs-en &40.2&45.1&43.6&51.1&39.3&- &48.2&41.6&42.0&41.5&42.3&49.8&50.0&49.6&43.3&50.7\\
            de-en &54.1&57.9&53.2&64.6&54.3&- &62.7&55.2&55.1&55.3&55.4&65.1&65.3&64.9&56.3&67.3\\
            fi-en &44.2&51.0&46.1&60.9&44.0&- &58.1&46.8&45.7&47.2&46.6&60.2&60.5&59.9&48.3&57.1\\
            lv-en &44.2&50.0&48.3&57.6&44.2&- &54.8&46.4&45.7&47.3&46.5&56.9&57&56.7&47.8&53.5\\
            ru-en &60.9&64.4&60.3&71.7&60.7&- &70.3&61.4&61.8&61.2&62.2&71.5&71.5&71.4&63.0&72.4\\
            tr-en &47.9&53.6&53.8&63.9&48.0&- &60.1&48.6&48.5&48.6&49.2&62.5&63.1&62.5&50.9&64.0\\
            \midrule
            \multicolumn{16}{l}{\textit{Word Alignment (m-BERT)}}\\ 
            cs-en &44.0& 45.0& 43.4& 45.7& 44.1& 44.2 &44.6& 44.0& 45.0& 44.5& 45.0& 44.6& 45.1& 44.9& 45.1&- & 45.0\\
            de-en &79.1& 80.6& 79.6& 79.5& 79.3& 80.2 &80.5& 79.3& 80.2& 79.9& 80.4& 80.0& 80.3& 80.2& 80.5&-& 80.0\\
            \multicolumn{16}{l}{\textit{Word Alignment (XLM-R)}}\\
            cs-en &41.1& 42.1& 41.8& 45.2& 40.9& - &44.4& 41.2& 42.1& 41.8& 42.1& 44.6& 44.9& 44.8& 42.0&43.3\\
            de-en &78.5& 79.6& 77.9& 80.5& 78.7& - &80.8& 78.8& 78.9& 79.2& 79.1& 81.0& 81.4& 81.2& 79.0&82.6\\
    \midrule
\end{tabular}
\caption{Full results of baselines and our approaches. Real-NVP and GAN-Real-NVP are denoted by NVP and GNVP.  [Method]$+$P/C/G denotes the integration in Procrustes refinement (P), in our bootstrapping procedure constrained with cross-correlation (C) and with graph structure (G).}
\label{tab:full-results}
\end{sidewaystable*}

\end{document}